\DeclareMathOperator*{\argmax}{arg\,max}
\DeclareMathOperator*{\argmin}{arg\,min}
\newcommand*\diff{\mathop{}\!\mathrm{d}}
\newcommand{\norm}[1]{\left\lVert #1 \right\rVert}
\newtheorem{theorem}{Theorem}
\newtheorem*{theorem*}{Theorem}
\newtheorem{corollary}{Corollary}[theorem]
\newtheorem{lemma}[theorem]{Lemma}
\title{
A Framework for Verification of Wasserstein Adversarial Robustness\\
}
\author{Tobias Wegel$^{1,2}$, Felix Assion$^2$, David Mickisch$^2$, Florens Greßner$^2$}
\affil{$^1$University of Göttingen\\ $^2$neurocat GmbH \\ }
\date{\vspace{-0.5cm}}
\begin{document}

\twocolumn[
  \begin{@twocolumnfalse}
    \maketitle
    \begin{abstract}
        Machine learning image classifiers are susceptible to adversarial and corruption perturbations. Adding imperceptible noise to images can lead to severe misclassifications of the machine learning model. Using $L_p$-norms for measuring the size of the noise fails to capture human similarity perception, which is why optimal transport based distance measures like the Wasserstein metric are increasingly being used in the field of adversarial robustness. Verifying the robustness of classifiers using the Wasserstein metric can be achieved by proving the absence of adversarial examples (certification) or proving their presence (attack). In this work we present a framework based on the work by Levine and Feizi \cite{levine2019wasserstein}, which allows us to transfer existing certification methods for convex polytopes or $L_1$-balls to the Wasserstein threat model. The resulting certification can be complete or incomplete, depending on whether convex polytopes or $L_1$-balls were chosen. Additionally, we present a new Wasserstein adversarial attack that is projected gradient descent based and which has a significantly reduced computational burden compared to existing attack approaches.\\
    \end{abstract}
\end{@twocolumnfalse}
]

\section{Introduction}
In recent years, machine learning systems -- in particular deep neural networks -- have been shown to be susceptible to adversarially crafted and randomly generated input noise. Since the publications of Biggio et al.\ \cite{Biggio_2013}, Szegedy et al.\ \cite{szegedy2014intriguing} and subsequently Goodfellow et al.\ \cite{goodfellow2015explaining}, this issue has become widely known and a large research community has formed to explain and explore the robustness of neural networks. To this day, image classification is the main focus of this research field. 

The choice of the imperceptibility metric, i.e.\ the measure which determines what counts as a ``small" perturbation or noise, plays an important role in the evaluation of robustness. Traditionally, $L_p$-norms were the natural choice, because they simplify computation complexity and machine learning researchers tend to think in terms of $L_p$-norms by default. However, using these metrics has a flaw: They are by no means specific to the kind of input that is provided. This has been acknowledged by large parts of the research community \cite{carlini2017evaluating, sharif2018suitability}. The $L_p$-norm does not use the structure of the images at all. An image largely relies on its two-dimensional nature and the distance between pixels often carries important information that is lost when using $L_p$-norms. In fact, it has been shown that $L_p$-norms are insufficient in replicating similarity perception of humans \cite{sharif2018suitability}. Thus, there have been efforts to include other distance measures in the robustness evaluation. One particular threat model for image classification that uses the Wasserstein distance as a metric in the input space has recently been introduced by Wong et al.\ \cite{wong2020wasserstein}. They introduce a computationally feasible method that produces adversarial examples with respect to the Wasserstein metric. The Wasserstein metric is a metric that actually uses the structure of images and coincides with the human intuition of similarity in images to some extent \cite{DBLP:dblp_journals/ijcv/RubnerTG00}. 

To defend machine learning models against Wasserstein-based attacks (i.e.\ robustify them), randomised smoothing has been transferred to be applicable to the Wasserstein distance by Levine and Feizi \cite{levine2019wasserstein}. The authors use an intriguing trick to change the domain in which the smoothing is done. This new domain contains all the flows - which represent a special kind of transport plan - between a fixed reference image and the other images. Therefore, for every image there is a fibre containing all the flows in the domain that are representations of transport plans between the reference and the image. Further, they show that the Wasserstein metric between two images is bounded by the $L_1$-norm in the new domain of any flows in the corresponding fibres of the images.
This result is based on Ling and Okada \cite{okada2007earthmovers} and therefore only applies to the 1-Wasserstein metric that uses the $L_1$-norm on the ground pixle space.
This domain shift will also be a central element for the verification methods in this paper.

In this work, we construct a certification method with respect to the Wasserstein distance. We utilize the results from Levine and Feizi \cite{levine2019wasserstein} by changing the domain in which the certification is done. We then show that our method is applicable to CNNs, RNNs and other model types which 1) have an affine transformation as the first operation and for which 2) a certification method with respect to $L_1$-norm bounded sets or convex polytopes already exists. To the best of our knowledge, it is the first certification method for the Wasserstein metric and novel in its approach. Additionally, we also construct an adversarial attack with respect to the Wasserstein metric, which utilizes the same domain shift. Overall, this domain shift together with the results presented in this paper facilitate a framework in which other verification procedures with respect to the Wasserstein metric can be conducted. 

\section{Preliminaries}
We consider gray-scale images in this work, but the results can partially be transferred to RGB images.
Gray-scale images can be mathematically described as a metric ground space $(\mathcal{X},d)=(\{1,...,n\}\times\{1,...,m\},d)$, where $(i,j)$ is the pixel in the $i^{th}$ row and the $j^{th}$ column and $d$ is the metric induced by the $L_1$-norm. The pixel intensity can then be specified by a probability distribution $\mu$ on $\mathcal{X}$, given by $\mu_{i,j}=\mu(\{(i,j)\})$, $(i,j)\in\mathcal{X}$, such that $\sum_{(i,j)\in\mathcal{X}}\mu_{i,j}=1$ and $\mu_{i,j}\geq0$ for all ${(i,j)\in\mathcal{X}}$. Usually, these constraints are not naturally fulfilled when looking at images. We will therefore assume that all images are normalised and denote the set of all probability distributions (i.e.\ gray-scale images) as $P(\mathcal{X})\subset \mathbb{R}^{n\times m}$.

Assume you have a classifier $F:\mathbb{R}^{n\times m}\rightarrow \{1,\dots,C\}$ that assigns the class $F(\mu)\in \{1,\dots,C\}$ to the image $\mu$. Consider a neighbourhood $\mathcal{S}\subset P(\mathcal{X})$ around $\mu$. An adversarial example with respect to $\mathcal{S}$ is an image $\nu\in \mathcal{S}$, such that $F(\nu)\neq F(\mu)$. An adversarial attack is a method that tries to compute a single adversarial example, whilst a certification method tries to prove the absence of adversarial examples in $\mathcal{S}$. Is the absence proven, $\mathcal{S}$ is said to be certified. Certification methods that can always prove this absence are called complete, whilst certification methods that might not be able to in some cases are called incomplete.

In this paper we consider the case of 
$$
\mathcal{S} = \{\nu\in P(\mathcal{X})| W_1(\mu,\nu) < \varepsilon\} =: B_\varepsilon^{W_1}(\mu).
$$
Here, $W_1$ denotes the 1-Wasserstein metric, which is defined as
\begin{equation}
    \label{eq:def_W1}
    W_1(\mu,\nu) := \inf_{\pi \in \Pi(\mu,\nu)}\int_\mathcal{X\times\mathcal{X}}\norm{x-y}_1\diff{\pi(x,y)}  
\end{equation}
with $\Pi(\mu,\nu)$ being the set of all couplings of $\mu$ and $\nu$.

\subsection{1-Wasserstein Distance}

To make (\ref{eq:def_W1}) more tractable, Levine and Feizi \cite{levine2019wasserstein} introduce objects called \textbf{flows} 
$$
\delta \equiv (\delta^{\rightarrow}, \delta^{\downarrow}) \in \mathbb{R}^{n\times(m-1) + m\times(n-1)}=: \mathcal{F}
$$ 
that are elements of the \textbf{flow domain} $\mathcal{F}$.
Further, they define the flow application map $\Delta: P(\mathcal{X})\times \mathcal{F}\rightarrow \mathbb{R}^{n\times m}$,
$$
\Delta(\mu, \delta)_{i,j} = \mu_{i,j}+\delta_{i-1,j}^{\downarrow}+\delta_{i,j-1}^{\rightarrow}-\delta_{i,j}^{\downarrow} -\delta_{i,j}^{\rightarrow}.
$$
For every pixel of an image a flow defines how much ``mass" is being moved to its neighbouring pixels, when the flow application map $\Delta$ is applied\footnote{For an intuition on what flows are, we recommend reading the paper by Levine and Feizi \cite{levine2019wasserstein}.}.
It can also be used to describe the 1-Wasserstein metric on $P(\mathcal{X})$:

\begin{theorem}[Levine and Feizi\ \cite{levine2019wasserstein}]
\label{theorem:levine}
For any probability distributions $\mu$ and $\nu$ on $\mathcal{X}$ there exists at least one flow $\delta$, such that $\nu=\Delta(\mu,\delta)$. Because of that, we can denote $\delta^{\mu}$ as an arbitrary flow, such that for a reference probability distribution $R$ it holds: $\Delta(R,\delta^{\mu})=\mu$. Denote the set that contains such $\delta^{\mu}$ by $S_R^{\mu}$. Furthermore it holds that
$$
W_1(\mu,\nu)=\min_{\nu=\Delta(\mu,\delta)}\norm{\delta}_1.
$$
\end{theorem}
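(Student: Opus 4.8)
The plan is to read the flow application map as ``identity plus a discrete divergence'': identifying a flow $\delta=(\delta^{\rightarrow},\delta^{\downarrow})\in\mathcal{F}$ with a real-valued weighting of the (horizontal and vertical) edges of the $n\times m$ grid graph, the assignment $\delta\mapsto \Delta(\mu,\delta)-\mu$ is, up to sign, the boundary/divergence operator of that graph, sending an edge-weighting to the vertex function ``(mass flowing in) minus (mass flowing out)''. With this dictionary, $\nu=\Delta(\mu,\delta)$ is exactly the statement that $\delta$ is a flow on the grid graph routing the net mass imbalance $\mu-\nu$.

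\emph{Existence (first part of the theorem).} Since $\mu,\nu\in P(\mathcal{X})$ both sum to $1$, the vertex function $\mu-\nu$ has zero total sum; on a connected graph the image of the divergence operator is precisely the subspace of vertex functions with zero total sum, so a flow $\delta$ with $\nu=\Delta(\mu,\delta)$ exists. I would make this constructive by fixing a spanning tree of the grid and routing the required net mass along tree edges, which also sets up the bookkeeping for the quantitative part. (Alternatively, existence drops out of the $\ge$ construction below applied to the independent coupling of $\mu$ and $\nu$.) Granting existence, the notation $\delta^\mu$ and the set $S_R^\mu$ are well defined.

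\emph{The identity $W_1(\mu,\nu)=\min_{\nu=\Delta(\mu,\delta)}\norm{\delta}_1$.} I would prove two inequalities. For $(\le)$, take any feasible $\delta$ and decompose it as a flow on the grid graph: cancel oriented cycles — which preserves the divergence and never increases $\norm{\delta}_1$ — to obtain a decomposition into finitely many weighted simple paths, each running from a pixel where $\mu$ exceeds $\nu$ to one where $\nu$ exceeds $\mu$, with $\norm{\delta}_1=\sum_k w_k\ell_k$ ($w_k$ the weight, $\ell_k$ the edge-length of path $k$). Assembling the paths into a coupling $\pi$ that transports mass $w_k$ from the start $x_k$ to the end $y_k$ of path $k$ gives an element of $\Pi(\mu,\nu)$, and since $\ell_k$ dominates the grid distance, which equals $\norm{x_k-y_k}_1$, we get $W_1(\mu,\nu)\le\int\norm{x-y}_1\diff\pi(x,y)\le\sum_k w_k\ell_k=\norm{\delta}_1$. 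For $(\ge)$, take an optimal coupling $\pi$ (which exists by compactness of $\Pi(\mu,\nu)$ and continuity of the cost), route for each pair $(x,y)$ in its finite support the mass $\pi(x,y)$ along an $L_1$-geodesic path in the grid, and sum the contributions edgewise into a flow $\delta$; then $\Delta(\mu,\delta)=\nu$ by construction, and by the triangle inequality applied componentwise (cancellation only helps) $\norm{\delta}_1\le\sum_{(x,y)}\pi(x,y)\norm{x-y}_1=W_1(\mu,\nu)$. Attainment of the minimum follows because the feasible set is a closed affine subspace of $\mathcal{F}$, its intersection with $\{\norm{\delta}_1\le W_1(\mu,\nu)+1\}$ is nonempty and compact, and $\norm{\cdot}_1$ is continuous; the $(\ge)$ bound shows this minimizer is global.

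I expect the main obstacle to be the bookkeeping in the $(\le)$ direction: justifying rigorously that cycle cancellation preserves the divergence while not increasing the $\ell^1$ norm, that the resulting path decomposition is finite, and — crucially — that reassembling the paths into a coupling respects the precise sign conventions built into $\Delta$ (incoming $\delta^{\downarrow}_{i-1,j},\delta^{\rightarrow}_{i,j-1}$ versus outgoing $\delta^{\downarrow}_{i,j},\delta^{\rightarrow}_{i,j}$). Everything else is either standard discrete potential theory on a connected graph or a routine finite-dimensional compactness argument, and the essential geometric input is simply that the $L_1$ metric on $\{1,\dots,n\}\times\{1,\dots,m\}$ coincides with the shortest-path metric of the grid graph, so transport can be realized edge by edge without loss.
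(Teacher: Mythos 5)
Your proposal is sound. Note, however, that the paper does not prove this statement itself: it is quoted from Levine and Feizi, and the paper only remarks that their proof rests on Ling and Okada's result that the $1$-Wasserstein distance with $L_1$ ground metric equals a minimum-cost flow on the grid graph. Your argument is essentially a self-contained reconstruction of exactly that flow-based approach: your $(\ge)$ direction (routing an optimal coupling along $L_1$-geodesic grid paths, with cancellation only decreasing the norm) is precisely what the paper's Algorithm \ref{mapping_mu_to_delta_algorithm_with_ot} in Appendix B implements, and your $(\le)$ direction (cycle cancellation plus path decomposition of a feasible flow, reassembled into a coupling) is the content of the Ling--Okada argument. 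The only bookkeeping point worth making explicit, which you already flag, is that the paths alone give a coupling of the positive parts $(\mu-\nu)^+$ and $(\nu-\mu)^+$; you must add the ``stay in place'' diagonal mass $\min(\mu_x,\nu_x)$ at each pixel to obtain an element of $\Pi(\mu,\nu)$, which costs nothing and does not affect the bound. With that detail filled in, your existence argument via the incidence matrix of a connected graph and your compactness argument for attainment are both correct (indeed, attainment already follows from sandwiching the constructed flow between the two inequalities).
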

The proof is based on a result by Ling and Okada \cite{okada2007earthmovers}. We will make use of theorem \ref{theorem:levine} as well as two consequences which have also been central for Levine and Feizi in \cite{levine2019wasserstein}.
First of all, let us fix a reference distribution $R$. Due to $\Delta(\Delta(\mu,\delta_1),\delta_2)=\Delta(\mu, \delta_1+\delta_2)$, it holds that
\begin{align*}
    W_1(\mu,\nu)&=\min_{\nu=\Delta(\mu,\delta)}\norm{\delta}_1\\ &=\min_{\nu=\Delta(\Delta(R,\delta^{\mu}),\delta)}\norm{\delta}_1\\
    &=\min_{\nu=\Delta(R,\delta^{\mu}+\delta)}\norm{\delta}_1\\ &=\min_{\nu=\Delta(R,\delta^{\nu})}\norm{\delta^{\mu}-\delta^{\nu}}_1\\
    &=\norm{\delta^{\mu}-\delta^{\nu}_*}_1
\end{align*}
for any $\delta^{\nu}_* \in \argmin_{\nu=\Delta(R,\delta^\nu)} \norm{\delta^{\mu}-\delta^{\nu}}_1$.
Secondly, we can do the following: Assume you have a classifier $F=c\circ f$, where $f:\mathbb{R}^{n\times m}\rightarrow \mathbb{R}^C$ has an affine transformation as the first operation and $c:\mathbb{R}^{C}\rightarrow \{1,...,C\}$, $c(x)\in \argmax_i x_i$. In particular, $F$ can be a FCNN or CNN. The machine learning model $F$ classifies the probability distributions $\mu$ on $\mathcal{X}$. We can define an equivalent classifier $\Tilde{F}$ that is defined as $\Tilde{F}(\delta)=(c\circ \Tilde{f})(\delta):=(c\circ f)(\Delta(R,\delta))$. It follows that $\tilde{f}(\delta^\mu)=f(\mu)$. Hence, we obtain a new classifier that classifies the flow in the same way as the original classifier classifies the corresponding probability distribution.

Levine and Feizi \cite{levine2019wasserstein} continue to use these results for smoothing, by smoothing $\Tilde{f}$ around a given $\delta^\mu$. In doing so, they use a method introduced by Lecuyer et al.\ in \cite{lecuyer2019certified}.

\subsection{The Application Map $\Delta$}

In order to derive a new verification framework for the Wasserstein metric from theorem \ref{theorem:levine} and its consequences, you first have to be able to map any $\mu$ to a flow $\delta^\mu\in S_R^{\mu}$. We therefore define the mapping $\Delta^{-1}:P(\mathcal{X})\times P(\mathcal{X})\rightarrow \mathcal{F}$, where $\Delta^{-1}(R,\mu)=\delta^\mu$. Since the flow $\delta^\mu$ is not unique, the inverse mapping of $\Delta$ is not well-defined and we have to select a suitable $\delta^\mu$ from the set $S_R^\mu$.
Therefore, for a fixed reference distribution $R$ the map $\Delta^{-1}(R,.)$ can be any right inverse of $\Delta(R,.)$, i.e.\ it has to have the following property: 
$$
(\Delta(R,.)\circ \Delta^{-1}(R,.))(\mu)=\mu.
$$
One choice can be made via the algorithm posed in the proof of Ling and Okada \cite{okada2007earthmovers}, which we summarise in algorithm \ref{mapping_mu_to_delta_algorithm_with_ot} in appendix B. 
It solves the optimal transport problem and uses the optimal coupling of $R$ and $\mu$ to construct $\delta^\mu$. This is particularly easy if we choose $R_{i,j}=1$ for any pixel $(i,j)$. However, the transport plan does not need to be optimal.

Having established an algorithm to construct the $\delta^\mu$, we can actually work with $\Tilde{f}$ in the flow domain $\mathcal{F}$ for the verification of Wasserstein adversarial robustness using certification and attack. To derive the general framework, we need two general statements about the flow application map $\Delta$.

\begin{theorem}[]
\label{Delta_is_affine_theorem}
The map $\delta \mapsto\Delta(R,\delta)$ is an affine transformation for any reference distribution $R$.
\end{theorem}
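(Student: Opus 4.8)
The plan is to exhibit the map $\delta \mapsto \Delta(R,\delta)$ explicitly in the form $\delta \mapsto L\delta + b$, where $L$ is a fixed linear map and $b$ a constant vector; this is exactly what it means for the map to be an affine transformation.

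First I would pin down the constant term. Substituting the zero flow $\delta \equiv 0$ into the definition of $\Delta$ makes every $\delta$-dependent summand vanish, so $\Delta(R,0)_{i,j} = R_{i,j}$ for all $(i,j)$, and the candidate constant is $b = R$ (viewed as a vector in $\mathbb{R}^{n\times m}$). Second, I would consider the map $L(\delta) := \Delta(R,\delta) - R$, whose $(i,j)$-th coordinate is $\delta_{i-1,j}^{\downarrow}+\delta_{i,j-1}^{\rightarrow}-\delta_{i,j}^{\downarrow} -\delta_{i,j}^{\rightarrow}$, with the convention that flow components whose indices fall outside the valid range are set to $0$ (this handles boundary pixels uniformly). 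Each such coordinate is a fixed $\mathbb{R}$-linear combination of the entries of $\delta$ — crucially, it does not involve $R$ at all — so $L$ is additive and homogeneous, i.e.\ a linear map $\mathcal{F} \to \mathbb{R}^{n\times m}$ (up to sign, the discrete divergence operator associated with the grid). Hence $\Delta(R,\cdot) = L(\cdot) + R$ is affine, and since $R$ was an arbitrary reference distribution, the claim holds for every $R$.

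I expect essentially no real obstacle: the statement falls out of the defining formula as soon as one observes that the only $R$-dependence in $\Delta(R,\delta)_{i,j}$ is the single additive term $R_{i,j}$, everything else being linear in $\delta$. The only point requiring a little care is the index bookkeeping at the image boundary — making precise that out-of-range $\delta$-components vanish — but this is cosmetic and does not affect linearity. As an alternative one could instead deduce the result from the already-noted identity $\Delta(\Delta(\mu,\delta_1),\delta_2) = \Delta(\mu,\delta_1+\delta_2)$ together with $\Delta(R,0)=R$, but the direct inspection of the formula is the cleanest route.
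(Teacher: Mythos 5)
Your proof is correct and takes essentially the same approach as the paper: both decompose $\Delta(R,\delta)$ into the constant part $R$ plus a part linear in $\delta$, read off directly from the defining formula. The only difference is that the paper exhibits the linear part as an explicit matrix $A=(A^{(1)}|A^{(2)})$ (which it reuses later, e.g.\ for $1^\top A=0$ and the feasibility constraint $A\delta+R\geq 0$), whereas you argue linearity coordinate-wise without writing the matrix down, which is sufficient for the theorem as stated.
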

\begin{proof}
\setcounter{MaxMatrixCols}{20}
We want to express $\Delta(R,\delta)$ as $R+A\delta$, where we view  $R=(R_{1,1},R_{1,2},...,R_{n,m})^\top$
and 
$$
\delta = (\delta_{1,1}^{\rightarrow},\delta_{1,2}^{\rightarrow}...\delta_{1,m-1}^{\rightarrow}\delta_{2,1}^{\rightarrow}...\delta_{n,m-1}^{\rightarrow},\delta_{1,1}^{\downarrow},\delta_{1,2}^{\downarrow},...,\delta_{n-1,m}^{\downarrow})^\top
$$
as flattened column vectors and $A$ as a matrix of size $nm\times n(m-1)+m(n-1)$. 

$A$ can be given explicitly by a decomposition into two blocks $A=(A^{(1)}|A^{(2)})$, where $A^{(1)}\in \mathbb{R}^{nm\times n(m-1)}$, $A^{(2)}\in \mathbb{R}^{nm\times m(n-1)}$ and
\begin{align*}
    A^{(1)}_{i,j}&=
    \begin{cases}
    -1 &j=i-\lceil\frac{i}{m}\rceil+1, j\neq \lceil\frac{i}{m}\rceil(m-1)\\
    1 &j=i-\lceil\frac{i}{m}\rceil, j\neq(\lceil\frac{i}{m}\rceil-1)(m-1)-1 \\
    0 &\text{else}
    \end{cases}\\
    A^{(2)}_{i,j}&=
    \begin{cases}
    -1 &j=i\\
    1 &j=i-m\\
    0 &\text{else.}
    \end{cases}
\end{align*}
The fact that $\Delta(R,\delta)=R+A\delta$ then becomes a simple calculation.
\end{proof}

\begin{corollary}
\label{f_tilde_is_a_nn_corollary}
Assume $F=c\circ f$ is a classifier with an affine transformation $x \mapsto W_1x+b_1$ as its first operation. Then $\tilde{F}=c\circ \tilde{f}$ is also classifier with an affine transformation $\delta \mapsto \Tilde{W}_1\delta + \Tilde{b}_1$ as its first operation, where
$$
\Tilde{W_1}=W_1A \text{,\ \ \  } \Tilde{b}_1=W_1R+b_1.
$$
\end{corollary}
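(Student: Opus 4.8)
The plan is to simply chain affine maps. Decompose the given feature extractor as $f = g\circ\ell_1$, where $\ell_1\colon x\mapsto W_1 x + b_1$ is its first (affine) operation and $g$ collects everything that comes afterwards (the remaining layers, activations, pooling, etc.); here we view the image argument $x=\mu$ as the flattened column vector $R=(R_{1,1},R_{1,2},\dots,R_{n,m})^\top$ exactly as in Theorem~\ref{Delta_is_affine_theorem}.

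By definition $\tilde f(\delta)=f(\Delta(R,\delta))$. Invoking Theorem~\ref{Delta_is_affine_theorem} to substitute $\Delta(R,\delta)=R+A\delta$ and then expanding $\ell_1$, one gets
\[
\tilde f(\delta)=g\bigl(\ell_1(R+A\delta)\bigr)=g\bigl(W_1(R+A\delta)+b_1\bigr)=g\bigl((W_1A)\,\delta+(W_1R+b_1)\bigr).
\]
Thus $\tilde f=g\circ\tilde\ell_1$ with $\tilde\ell_1\colon\delta\mapsto\tilde W_1\delta+\tilde b_1$, $\tilde W_1=W_1A$, $\tilde b_1=W_1R+b_1$: the first operation of $\tilde f$ is the affine map $\tilde\ell_1$, followed by the very same subnetwork $g$ that appears in $f$. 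Composing with $c$ on the output side yields $\tilde F=c\circ\tilde f$ with precisely the claimed structure.

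Given Theorem~\ref{Delta_is_affine_theorem}, the argument is essentially immediate, since the composition of two affine maps is again affine. The only point requiring a little care is dimensional bookkeeping and the flattening convention: with $d_1$ the output dimension of the first layer we have $W_1\in\mathbb{R}^{d_1\times nm}$ and $A\in\mathbb{R}^{nm\times(n(m-1)+m(n-1))}$, so that $\tilde W_1=W_1A\in\mathbb{R}^{d_1\times(n(m-1)+m(n-1))}$ correctly maps flows, and one must use the same vectorization of the image in ``$W_1x+b_1$'' as the one fixed in Theorem~\ref{Delta_is_affine_theorem}; I do not anticipate any genuine obstacle beyond this.
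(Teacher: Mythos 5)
Your proof is correct and follows exactly the route the paper intends: the corollary is stated without proof as an immediate consequence of Theorem~\ref{Delta_is_affine_theorem}, namely substituting $\Delta(R,\delta)=R+A\delta$ into the first affine layer and regrouping to get $\tilde W_1=W_1A$, $\tilde b_1=W_1R+b_1$. Your added remarks on dimensions and the flattening convention are fine and consistent with the paper.
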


In particular, this means that if $F$ is for instance an FCNN, then $\Tilde{F}$ is also an FCNN.

\section{Certification}

In this section we will introduce two certification methods, the first called \textit{vanilla} and the second called \textit{fine-tuned}. The vanilla method is based on a $L_1$-ball certification in the flow domain. The fine-tuned version, on the other hand, takes into account that we can restrict the flow domain to flows that correspond to probability distributions, resulting in the certification of convex polytopes. The fine-tuned certification also optimises with respect to the chosen reference distribution. Because optimising over convex polytopes is computationally more complex than optimising over $L_1$-balls, choosing either method represents a trade-off between precision and computational complexity.

\subsection{Vanilla Certification}
\label{subsection:vanilla_certification}

We now come to the central result, which allows us to conduct certification of Wasserstein adversarial robustness. Theorem \ref{certification_in_the_flow_domain_theorem} utilizes the results that we have seen so far by showing that the certification task with respect to the Wasserstein metric on $P(\mathcal{X})$ can be solved by an $L_1$-certification in the flow domain.

\begin{theorem}[Certification in the Flow Domain]
\label{certification_in_the_flow_domain_theorem}
Let $F$ be a classifier and $\mu\in P(\mathcal{X})$ be a normalised gray-scale image. If we certify the absence of adversarial examples in $B_\varepsilon^{L_1}(\delta^{\mu})$ around any flow $\delta^{\mu}$ of $\mu$ for $\Tilde{F}$, then we have also certified the absence of adversarial examples in $B_\varepsilon^{W_1}(\mu)$ for F. 
\end{theorem}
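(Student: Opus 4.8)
The approach is to prove the statement directly: I fix an arbitrary $\nu \in B_\varepsilon^{W_1}(\mu)$ and deduce $F(\nu) = F(\mu)$ from the hypothesis that $\tilde F$ is constant (equal to $\tilde F(\delta^\mu)$) on the ball $B_\varepsilon^{L_1}(\delta^\mu)$. Equivalently, this is a contrapositive argument: an adversarial $\nu$ for $F$ inside the Wasserstein ball is transported, via an optimal flow, to an adversarial flow for $\tilde F$ inside the $L_1$ ball, contradicting certification of the latter.

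First I would fix the reference distribution $R$ and the representative $\delta^\mu \in S_R^\mu$ used for the certification, and recall from the computation following Theorem~\ref{theorem:levine} that, for this $\delta^\mu$,
$$W_1(\mu,\nu) = \min_{\nu = \Delta(R,\delta^\nu)} \norm{\delta^\mu - \delta^\nu}_1 = \norm{\delta^\mu - \delta^\nu_*}_1$$
for some $\delta^\nu_* \in \argmin_{\nu = \Delta(R,\delta^\nu)} \norm{\delta^\mu - \delta^\nu}_1$. This $\delta^\nu_*$ is the object I need: it lies in $\mathcal F$, it satisfies $\Delta(R,\delta^\nu_*) = \nu$ because it belongs to the admissible set of the minimisation (which is nonempty by Theorem~\ref{theorem:levine}), and its $L_1$-distance to $\delta^\mu$ equals $W_1(\mu,\nu) < \varepsilon$. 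Hence $\delta^\nu_* \in B_\varepsilon^{L_1}(\delta^\mu)$.

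Next I would transfer the classifications using the defining identity $\tilde F(\delta) = (c\circ f)(\Delta(R,\delta)) = F(\Delta(R,\delta))$: evaluating at $\delta^\mu$ gives $\tilde F(\delta^\mu) = F(\mu)$, and evaluating at $\delta^\nu_*$ gives $\tilde F(\delta^\nu_*) = F(\nu)$. Since $\delta^\nu_* \in B_\varepsilon^{L_1}(\delta^\mu)$ and this ball has been certified for $\tilde F$, we get $\tilde F(\delta^\nu_*) = \tilde F(\delta^\mu)$, i.e.\ $F(\nu) = F(\mu)$. As $\nu$ was arbitrary, $B_\varepsilon^{W_1}(\mu)$ contains no adversarial example for $F$, which is the claim.

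The proof is short because all of the content sits in Theorem~\ref{theorem:levine} and the norm-preserving identity $W_1(\mu,\nu) = \norm{\delta^\mu - \delta^\nu_*}_1$ derived from it. The only point that genuinely needs attention is verifying that the transported flow $\delta^\nu_*$ lands \emph{inside} the open $L_1$-ball, which is exactly that identity combined with the strict inequality in the definition of $B_\varepsilon^{W_1}(\mu)$ — open maps to open, so no boundary issue arises. It is also worth noting that the conclusion does not depend on which representative $\delta^\mu \in S_R^\mu$ (or which reference $R$) was chosen, since the identity holds for every such choice.
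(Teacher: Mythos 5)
Your proof is correct and follows essentially the same route as the paper's: both fix $\nu$ with $W_1(\mu,\nu)<\varepsilon$, take a minimising flow $\delta^\nu_* \in \argmin_{\nu=\Delta(R,\delta^\nu)}\norm{\delta^\mu-\delta^\nu}_1$ so that theorem \ref{theorem:levine} gives $\norm{\delta^\mu-\delta^\nu_*}_1 = W_1(\mu,\nu)<\varepsilon$, and then invoke the certified $L_1$-ball together with $\Tilde{F}(\delta)=F(\Delta(R,\delta))$ to conclude $F(\nu)=\Tilde{F}(\delta^\nu_*)=\Tilde{F}(\delta^\mu)=F(\mu)$. Your added remarks on the strict inequality and the independence of the choice of $\delta^\mu$ and $R$ are consistent with the paper and do not change the argument.
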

\begin{proof}
Say we have $\mu\in P(\mathcal{X})$ and we want to certify the absence of adversarial examples within a radius of size $\varepsilon$ around $\mu$ measured by the Wasserstein metric. We want to show that $\forall \nu\in P(\mathcal{X})$ with $W_1(\mu,\nu)<\varepsilon$, it holds that $F(\mu)=F(\nu)$.\\
Choose any $\delta^{\mu}$ and $\nu$ such that $W_1(\mu,\nu)<\varepsilon$. Let 
$$
    \delta^{\nu}_* \in \argmin_{\nu=\Delta(R,\delta^{\nu})}\norm{\delta^{\mu}-\delta^{\nu}}_1.
$$
Then by theorem \ref{theorem:levine}, we know that $W_1(\mu,\nu)=\norm{\delta^{\mu}-\delta^{\nu}_*}_1$. It follows that $\norm{\delta^{\mu}-\delta^{\nu}_*}_1<\varepsilon$. By assumption it holds that $\forall \delta$ with $\norm{\delta^{\mu}-\delta}_1<\varepsilon$, we have $\Tilde{F}(\delta)=\Tilde{F}(\delta^{\mu})$ and thus it also holds for $\delta=\delta^{\nu}_*$. Consequently,
$$
F(\nu)=\Tilde{F}(\delta^{\nu}_*)=\Tilde{F}(\delta^{\mu})=F(\mu)
$$
which guarantees the absence of adversarial examples in $B_\varepsilon^{W_1}(\mu)$.
\end{proof}

Combining corollary \ref{f_tilde_is_a_nn_corollary} and theorem \ref{certification_in_the_flow_domain_theorem}, we actually get to use the large arsenal of certification methods that have been proven effective for $L_1$-certification of machine learning classifiers. For example, we can use FROWN by Lyu et al.\ \cite{lyu2019fastened} to conduct the actual certification. The \textbf{vanilla certification procedure} consists of the following steps:
\begin{enumerate}
    \item We select a reference distribution $R$, as well as an algorithm $\Delta^{-1}(R,.)$ (e.g.\ algorithm \ref{mapping_mu_to_delta_algorithm_with_ot} in appendix B) that maps any $\mu$ to a $\delta^\mu\in S_R^\mu$. 
    \item We transform the dataset $D$, which contains all the probability distributions $\mu$ to a data set $\Tilde{D}$, that contains the flows $\delta^\mu$ for every $\mu$. This is done by applying the right inverse mapping $\Delta^{-1}(R,.)$, i.e.\ with the chosen algorithm.
    \item We create the new classifier $\Tilde{F}$ by replacing the first layer weights and biases of $F$ according to corollary \ref{f_tilde_is_a_nn_corollary}.
    \item We pass $\Tilde{F}$ and $\Tilde{D}$ to an already existing $L_1$-certification method. The resulting bounds for the $L_1$-distance are also bounds for the Wasserstein distances for our original images and model (theorem \ref{certification_in_the_flow_domain_theorem}).
\end{enumerate}

\subsection{Fine-Tuned Certification}
\label{fine_tuning_chapter}

Before introducing the fine-tuned certification method, we will describe why a fine-tuning of the vanilla certification scheme is desirable. 

For one, if you fix a reference distribution $R$, not all elements in the flow domain $\mathcal{F}$ are feasible flows, i.e.\ not all flows result in probability distributions, when mapped by $\Delta(R,.)$. 
Suppose we have $n=m=28$, as is the case for MNIST images. Then the dimension of $\mathcal{F}$ is $n(m-1)+m(n-1)=2\times28\times27=1512$. Only a small subset of this high-dimensional space contains the feasible flows. Therefore, the probability that you randomly find a feasible flow for images with a resolution of $28\times 28$ like MNIST is close to zero.
If you go back to the proof of theorem \ref{certification_in_the_flow_domain_theorem}, you will see that we only need to show that $\Tilde{F}(\delta^\nu_*)=\Tilde{F}(\delta^\mu)$. However, if the $L_1$-ball around $\delta^\mu$ contains regions without any feasible flows, then we certify unnecessarily much and consequently we might not be able to determine the largest radius possible. Thus, by restricting the $L_1$-ball further, we can improve the accuracy of the vanilla certification approach. As we will see, whenever we use a complete certification method, no further fine-tuning is possible nor necessary. In appendix A we demonstrate the impact of fine-tuning with linear classifiers by solving the certification and attack task analytically.

A second option how we can fine-tune the certification is via the choice of the reference distribution $R$. As theorem \ref{certification_in_the_flow_domain_theorem} shows, we can choose any reference $R$ and then certify a $L_1$-ball around a $\delta^{\mu}\in S_R^\mu$. However, it is not obvious that the choice of $R$ has no influence on the certification result whenever we use an incomplete $L_1$-certification method. We will address this question empirically by conducting experiments with different reference distributions.

\subsubsection{Feasibility}

To get an understanding of where in the flow domain the feasible flows can be, we first have to recall what makes a flow feasible:
Fix a reference distribution $R$. For any flow $\delta\in\mathcal{F}$, the flow is feasible if and only if
\begin{enumerate}
    \item $\Delta(R,\delta)(\mathcal{X})= \norm{\Delta(R,\delta)}_1 = 1$
    \item $\Delta(R,\delta)(\{(i,j)\})\geq 0$ for all $(i,j)\in\mathcal{X}$.
\end{enumerate}
We denote the set of feasible flows $\mathbb{F}_R = \mathbb{F}\subset \mathcal{F}$, where $R$ is omitted if it is clear from the context.

Lemma \ref{first_characteristic_lemma} rules out characteristic one as a criterion for determining where feasible flows might lie in the flow domain. 

\begin{lemma}[First Characteristic]
\label{first_characteristic_lemma}
For any reference $R$, any flow $\delta\in\mathcal{F}$ fulfills characteristic one, i.e.\ it holds that $\Delta(R,\delta)(\mathcal{X})=1$.
\end{lemma}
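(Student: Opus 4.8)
The plan is to evaluate $\Delta(R,\delta)(\mathcal{X}) = \sum_{(i,j)\in\mathcal{X}}\Delta(R,\delta)_{i,j}$ directly from the definition of the flow application map and observe that the contribution of $\delta$ telescopes away, leaving only the total mass of the reference distribution $R$, which equals $1$ because $R\in P(\mathcal{X})$.

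Concretely, I would split
$$
\sum_{i=1}^{n}\sum_{j=1}^{m}\Delta(R,\delta)_{i,j} = \sum_{i,j}R_{i,j} + \sum_{j=1}^{m}\sum_{i=1}^{n}\bigl(\delta^{\downarrow}_{i-1,j}-\delta^{\downarrow}_{i,j}\bigr) + \sum_{i=1}^{n}\sum_{j=1}^{m}\bigl(\delta^{\rightarrow}_{i,j-1}-\delta^{\rightarrow}_{i,j}\bigr),
$$
adopting the convention already implicit in the definition of $\Delta$ that flow components with an out-of-range index (namely $\delta^{\downarrow}_{0,j}$, $\delta^{\downarrow}_{n,j}$, $\delta^{\rightarrow}_{i,0}$, $\delta^{\rightarrow}_{i,m}$) are zero. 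For each fixed column $j$, the inner sum $\sum_{i=1}^{n}(\delta^{\downarrow}_{i-1,j}-\delta^{\downarrow}_{i,j})$ telescopes to $\delta^{\downarrow}_{0,j}-\delta^{\downarrow}_{n,j}=0$, and symmetrically for each fixed row $i$ the sum $\sum_{j=1}^{m}(\delta^{\rightarrow}_{i,j-1}-\delta^{\rightarrow}_{i,j})$ telescopes to $\delta^{\rightarrow}_{i,0}-\delta^{\rightarrow}_{i,m}=0$. Hence the flow contribution vanishes and $\Delta(R,\delta)(\mathcal{X})=\sum_{i,j}R_{i,j}=1$.

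An alternative, essentially equivalent route uses theorem \ref{Delta_is_affine_theorem}: writing $\Delta(R,\delta)=R+A\delta$ and letting $\mathbf{1}$ denote the all-ones vector, it is enough to check $\mathbf{1}^\top A = 0$, i.e.\ that every column of $A=(A^{(1)}\mid A^{(2)})$ sums to zero. Each column of $A^{(2)}$ has exactly one $+1$ and one $-1$ (the rows $i=j$ and $i=j+m$ are both valid since $j\le m(n-1)$), and each column of $A^{(1)}$, which encodes a rightward flow $\delta^{\rightarrow}_{i,j}$ with $j\le m-1$, likewise contributes one $+1$ (at pixel $(i,j+1)$, still inside the grid) and one $-1$ (at pixel $(i,j)$); the side conditions involving $\lceil i/m\rceil$ only prevent spurious wrap-around between rows in the flattened indexing. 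Then $\mathbf{1}^\top\Delta(R,\delta)=\mathbf{1}^\top R+(\mathbf{1}^\top A)\delta = 1$.

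I do not expect a genuine obstacle here. The only point requiring care is the bookkeeping of boundary terms: one must make the convention ``out-of-range flow components are $0$'' explicit and, in the matrix version, confirm that the $+1$ partner of each $-1$ in a column of $A$ always lands on a real pixel — which is exactly guaranteed by the fact that the flow domain $\mathcal{F}$ only carries components for edges internal to the grid. Once these conventions are fixed, both arguments reduce to a one-line telescoping computation.
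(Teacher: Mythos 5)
Your proposal is correct and essentially matches the paper's proof: the paper argues exactly via theorem \ref{Delta_is_affine_theorem}, writing $\Delta(R,\delta)=R+A\delta$ and checking $1^\top A=0^\top$ from the explicit form of $A$, which is your second route. Your primary telescoping computation is just the same cancellation carried out directly on the definition of $\Delta$ (with the out-of-range-components-are-zero convention made explicit), so no substantive difference.
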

\begin{proof}
Take any $R$ and any $\delta$. Due to theorem 1, we can write $\Delta(R,\delta)=R+A\delta$. Then it holds
\begin{align*}
    \Delta(R,\delta)(\mathcal{X}) 
    &= \sum_{(i,j)\in\mathcal{X}}\Delta(R,\delta)(\{(i,j)\})\\
    &= \sum_{k=1}^{nm} (R+A\delta)_k \\
    &= \sum_{k=1}^{nm}R_k+\sum_{l=1}^{nm}(A\delta)_l\\
    &= 1 + 1^\top A\delta
\end{align*}
where $1^\top:=(1,1,...,1)$. By inserting the explicit expression of the matrix $A$ (see theorem \ref{Delta_is_affine_theorem}), we get that $1^\top A=0^\top$ and thus we have $\Delta(R,\delta)(\mathcal{X})=1+0^\top\delta=1$, where $0^\top=(0,0,...,0)$.
\end{proof}
At first glance this result might seem a little bit surprising. However, it is actually rather obvious because applying any $\delta$ on a probability distribution resembles moving mass around the different pixels of an image. Whenever you remove mass at some point, you add the mass somewhere else, so the overall mass still has to sum to one (as also pointed out by Levine and Feizi \cite{levine2019wasserstein}). Consequently, lemma \ref{first_characteristic_lemma} implies that in order to restrict the $L_1$-ball around $\delta^\mu$, we have to rely on characteristic two of the feasible flow definition. 

\begin{lemma}[Second Characteristic]
\label{lemma:second_characteristic}
For any reference $R$, a flow $\delta\in\mathcal{F}$ is feasible, i.e.\ $\delta\in\mathbb{F}$, if and only if $R+A\delta \geq 0$ component-wise with $A$ defined as in theorem \ref{Delta_is_affine_theorem} and $R, \delta$ flattened column vectors. Hence,
$$
\mathbb{F} = \{\delta \in \mathcal{F} | A\delta + R \geq 0\}.
$$
\end{lemma}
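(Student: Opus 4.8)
The plan is to assemble this lemma directly from the two facts already in hand: Theorem \ref{Delta_is_affine_theorem}, which lets us write $\Delta(R,\delta) = R + A\delta$ as a vector identity, and Lemma \ref{first_characteristic_lemma}, which tells us that characteristic one is automatically satisfied by every flow. Since feasibility is defined as the conjunction of characteristics one and two, knowing that characteristic one is free reduces the whole question to characteristic two, which is exactly a component-wise nonnegativity condition on $\Delta(R,\delta)$.

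First I would unfold the definition of $\mathbb{F}$: a flow $\delta\in\mathcal{F}$ lies in $\mathbb{F}$ if and only if $\Delta(R,\delta)\in P(\mathcal{X})$, i.e.\ $\Delta(R,\delta)$ has total mass $1$ (characteristic one) and $\Delta(R,\delta)(\{(i,j)\})\geq 0$ for all $(i,j)\in\mathcal{X}$ (characteristic two). By Lemma \ref{first_characteristic_lemma}, the mass condition holds for every $\delta\in\mathcal{F}$ and every reference $R$, so it never rules anything out; hence $\delta\in\mathbb{F}$ if and only if characteristic two holds. Next I would substitute the affine representation from Theorem \ref{Delta_is_affine_theorem}: viewing $R$ and $\delta$ as the flattened column vectors of that theorem, we have $\Delta(R,\delta) = R + A\delta$, so for the pixel $(i,j)$ with flattened index $k$ we get $\Delta(R,\delta)(\{(i,j)\}) = (R+A\delta)_k$. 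Requiring this to be nonnegative for every pixel is precisely the statement that $R + A\delta \geq 0$ holds component-wise, which yields both the claimed equivalence and the set identity $\mathbb{F} = \{\delta\in\mathcal{F} \mid A\delta + R \geq 0\}$.

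There is essentially no hard step here; the lemma is a bookkeeping consequence of Theorem \ref{Delta_is_affine_theorem} and Lemma \ref{first_characteristic_lemma}. The only point that deserves a line of care is the identification between "$\Delta(R,\delta)$ evaluated at pixel $(i,j)$" and "the $k$-th entry of the flattened vector $R + A\delta$": one must note that the same flattening convention for $R$ (and the matching index ordering for $A$) fixed in the proof of Theorem \ref{Delta_is_affine_theorem} is being used, so that the vector inequality $R + A\delta \geq 0$ is literally the conjunction of the $nm$ scalar inequalities $\Delta(R,\delta)(\{(i,j)\})\geq 0$, and that "$\delta$ feasible" means exactly "$\Delta(R,\delta)\in P(\mathcal{X})$". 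Once that is spelled out, the proof is complete.
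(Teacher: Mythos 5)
Your proposal is correct and follows exactly the paper's own argument: invoke Lemma \ref{first_characteristic_lemma} to discard characteristic one, then use Theorem \ref{Delta_is_affine_theorem} to identify $\Delta(R,\delta)(\{(i,j)\})$ with the corresponding entry of $R+A\delta$, reducing feasibility to component-wise nonnegativity. The extra remark about matching the flattening convention is a fine (if implicit in the paper) point of care, but the substance is the same.
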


\begin{proof}
Due to lemma \ref{first_characteristic_lemma}, we only have to check the second characteristic of the definition of a feasible flow. Furthermore, theorem \ref{Delta_is_affine_theorem} implies that
$$
\Delta(R,\delta)(\{(i,j)\}) = (A\delta + R)_k
$$
where $k$ is the index corresponding to the pixel $(i,j)$.
\end{proof}

Lemma \ref{lemma:second_characteristic} allows us to restrict the relevant certification area and thus allows us to increase the accuracy of our certification approach.

\begin{theorem}[Fine-tuned Certification in the Flow Domain]
\label{fine_tuned_certification_in_the_flow_domain_theorem}
Given a classifier $F$ and a data point $\mu\in P(\mathcal{X})$, if we certify $B_\varepsilon^{L_1}(\delta^{\mu}) \cap \{\delta | A\delta + R \geq 0\}$ around any flow $\delta^{\mu}$ of $\mu$ for $\Tilde{F}$, then we have certified $B_\varepsilon^{W_1}(\mu)$ for F. 
\end{theorem}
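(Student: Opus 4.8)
The plan is to adapt the proof of Theorem~\ref{certification_in_the_flow_domain_theorem} almost verbatim, using Lemma~\ref{lemma:second_characteristic} to observe that the minimiser $\delta^\nu_*$ that appears in that argument always lands in the feasible set $\{\delta \mid A\delta + R \geq 0\}$, so the smaller certified region still contains it. First I would take an arbitrary $\nu \in P(\mathcal{X})$ with $W_1(\mu,\nu) < \varepsilon$ and fix a flow $\delta^\mu$ of $\mu$. As in Theorem~\ref{certification_in_the_flow_domain_theorem}, I would invoke the chain of equalities derived from Theorem~\ref{theorem:levine} to pick $\delta^\nu_* \in \argmin_{\nu = \Delta(R,\delta^\nu)} \norm{\delta^\mu - \delta^\nu}_1$ and conclude $\norm{\delta^\mu - \delta^\nu_*}_1 = W_1(\mu,\nu) < \varepsilon$, i.e.\ $\delta^\nu_* \in B_\varepsilon^{L_1}(\delta^\mu)$.

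The new ingredient is to check that $\delta^\nu_*$ also lies in $\{\delta \mid A\delta + R \geq 0\}$. This is immediate: by construction $\delta^\nu_*$ satisfies $\Delta(R,\delta^\nu_*) = \nu$, and $\nu$ is a probability distribution on $\mathcal{X}$, hence componentwise nonnegative; by Theorem~\ref{Delta_is_affine_theorem} we have $\Delta(R,\delta^\nu_*) = R + A\delta^\nu_*$, so $R + A\delta^\nu_* \geq 0$, which is exactly membership in the feasible polytope (Lemma~\ref{lemma:second_characteristic}). Therefore $\delta^\nu_* \in B_\varepsilon^{L_1}(\delta^\mu) \cap \{\delta \mid A\delta + R \geq 0\}$. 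By the hypothesis that this intersection is certified for $\tilde{F}$, we get $\tilde{F}(\delta^\nu_*) = \tilde{F}(\delta^\mu)$, and then the same identification $F(\nu) = \tilde{F}(\delta^\nu_*) = \tilde{F}(\delta^\mu) = F(\mu)$ used before closes the argument. Since $\nu$ was arbitrary, $B_\varepsilon^{W_1}(\mu)$ is certified for $F$.

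There is essentially no obstacle here — the argument is a strict strengthening of Theorem~\ref{certification_in_the_flow_domain_theorem}, and the only thing to be careful about is the direction of the logical implication: shrinking the region we must certify can only make the premise harder to satisfy, but the theorem is phrased as ``if we manage to certify the smaller region, then $\ldots$'', so the one-line feasibility observation about $\delta^\nu_*$ is genuinely all that is needed. If anything, the subtle point worth a remark is \emph{why} this is an improvement rather than a formal triviality: for an incomplete $L_1$-certifier, certifying the intersection with the polytope is a weaker demand than certifying the whole ball, so a fixed certifier may succeed on the former at a radius $\varepsilon$ where it fails on the latter — but establishing that gap quantitatively is the content of the experiments and of Appendix~A, not of this theorem.
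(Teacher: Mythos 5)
Your proposal is correct and follows essentially the same route as the paper: rerun the argument of Theorem~\ref{certification_in_the_flow_domain_theorem} and observe that $\Delta(R,\delta^\nu_*)=\nu\in P(\mathcal{X})$ forces $\delta^\nu_*\in\mathbb{F}=\{\delta \mid A\delta+R\geq 0\}$ by Lemma~\ref{lemma:second_characteristic}, so the minimiser lies in the restricted region and the certification hypothesis applies. The paper's own proof is just a terser version of this same feasibility observation.
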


\begin{proof}
This follows from theorem \ref{certification_in_the_flow_domain_theorem} combined with lemma \ref{lemma:second_characteristic}. In the proof of theorem \ref{certification_in_the_flow_domain_theorem} we have seen that we only need to show that for any $\nu$ with \ $W_1(\mu, \nu)<\varepsilon$ it holds $\tilde{F}(\delta^\mu) = \tilde{F}(\delta^{\nu}_*)$. Because $\Delta(R,\delta^\nu_*) = \nu$, we know that $\delta^\nu_*\in \mathbb{F}$. By lemma \ref{lemma:second_characteristic} it is therefore sufficient to certify $B_\varepsilon^{L_1}(\delta^{\mu}) \cap \{\delta | A\delta + R \geq 0\}$.
\end{proof}

\begin{figure*}[!t]
    \centering
    \includegraphics[width=0.9\textwidth]{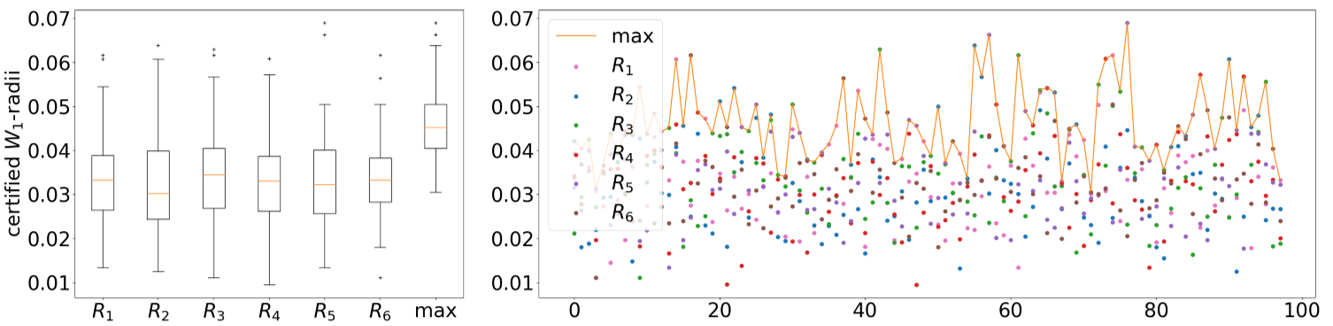}
    \caption{The certified $W_1$-radii for different references. The x-axis of the right figure denotes the different data points. The orange line marks the certificates received by taking the maximum over all references.}
    \label{fig:comparison_references}
\end{figure*}

It is important to note that $B_\varepsilon^{L_1}(\delta^{\mu}) \cap \{\delta | A\delta + R \geq 0\}$ is a \textbf{convex polytope}. This means that we can use any method suitable to certify convex polytopes, e.g.\ FROWN \cite{lyu2019fastened}.
The \textbf{fine-tuned certification procedure} consists of the same steps as the vanilla one, except that we do not pass $\tilde{D}$ and $\tilde{F}$ to an $L_1$-certification method, but instead a method that works with general convex polytopes. Also note that the fine-tuned certification method is optimal, in the sense that if we were to use a complete certification method, the resulting bounds for the Wasserstein metric would be exact. This can easily be seen: Assume the closest adversarial example $\nu$ to $\mu$ has Wasserstein distance $\varepsilon>0$. Then $\norm{\delta^\mu-\delta^\nu_*}_1 = \varepsilon$ and for all $\delta\in\mathbb{F}$, such that $\norm{\delta^\mu -\delta}_1< \varepsilon$, $\tilde{F}$ classifies $\delta$ correctly. Because the certification method for $\tilde{F}$ is complete, it will return $\varepsilon$.

\subsubsection{Reference Distribution}

For incomplete certification methods however, we have to address the second challenge of choosing a reference distribution $R$. We conducted an experiment with randomly sampled reference distributions and used algorithm \ref{mapping_mu_to_delta_algorithm_with_ot} to create the set of flows $\Tilde{D}$. We then used FROWN \cite{lyu2019fastened}, which is an incomplete certification method, to certify a three-layer FCNN on 100 images from MNIST. Note that for each reference, this only changes the bias in the first layer of $\Tilde{f}$ (see corollary \ref{f_tilde_is_a_nn_corollary}). In this way we can get a feeling about the impact of the choice of $R$ on the certification results. Overall, we considered six different reference distributions. We used two references $(R_1, R_2)$ where the values in each pixel were sampled uniformly in $[0,1]$. The references were then normalised to be probability distributions. Further, we considered two point references $(R_3, R_4)$ where all the probability mass was on one pixel (which was chosen randomly) and two images from MNIST as $(R_5, R_6)$. The results are shown in figure \ref{fig:comparison_references}. 

The average certification does not change a lot. As a matter of fact, the entire distributions of the certification results are very similar per reference. However, per data point, there are significant differences between the different reference distributions. This suggests that there is no systematically best choice of reference, but rather a best choice per data point. This leads to a very easy fine-tuning of our certification method with respect to the reference distribution. Let us denote the certificate of a data point $x$ that was received using a reference $R$ as $cert_R(x)$. Assume you randomly sample references $R_1,...,R_k$ (uniformly) or sample them from the given data set $D$, e.g.\ from MNIST. Then you can choose the fine-tuned certification $cert_F(x)$ as
$$
cert_F(x)= \max\{cert_{R_1}(x),...,cert_{R_k}(x)\}.
$$
In figure \ref{fig:comparison_references}, the $cert_F$ is denoted by $\max$. We see that we improve the certification substantially.\\

\subsection{Overview}

Alltogether we therefore introduced seven different kinds of certification methods, depending on which fine-tuning you choose and whether you use a complete certification method. 
Table \ref{tab:overview_certification_methods} summarises the different possible certification methods. Note that only the configuration using a complete convex polytope certifier and restricting the $L_1$-ball results in overall complete certification.

\begin{table}[]
    \centering
    \begin{tabular}{c|c|c|c||c}
          & Feas. & Ref. & Comp.Cert. & Comp.  \\
         \hline
         Vanilla & no & no & no & no\\
         \hline
         \multirow{6}{*}{Fine-tuned}& yes & no & no &no \\
         & no & yes & no & no\\
         \cline{2-5}
         & yes & yes & no &no\\
         \cline{2-5}
         & no & no & yes &no\\
         & no & yes & yes &no\\
         \cline{2-5}
         & yes & X & yes & yes
    \end{tabular}
    \caption{All possible certification methods using feasibility fine-tuning (Feas.), reference distribution fine-tuning (Ref.) and that use complete or incomplete certification methods (Comp.Cert.). The resulting method can either be complete or incomplete (Comp.). If the certification method is complete and feasibility fine-tuning is used, reference distribution fine-tuning cannot be applied (X).}
    \label{tab:overview_certification_methods}
\end{table}

We also compared the fine-tuned results using FROWN \cite{lyu2019fastened} of a three layer neural network with a smoothed classifier presented by Levine and Feizi \cite{levine2019wasserstein}, whose base classifier has a comparable accuracy on the test set of MNIST, i.e.\ around $98\%$. Interestingly, the \textit{certified} robustness of our model without smoothing is better than the \textit{certified} robustness with smoothing by a factor of around four, even though Levine and Feizi \cite{levine2019wasserstein} show that smoothing in general improves robustness. However, we do not claim that our robustness guarantees are better in general. When using smoothing to derive robustness guarantees, the depth of the neural network, or even the type of classifier, do not have any impact on such guarantees. Programs such as FROWN on the other hand scale very badly with depth. Therefore, our incomplete methods also scale poorly with the network depth.

\section{Attack}
\label{subsection:a_new_wasserstein_adversarial_attack}
We present a simple Wasserstein gradient descent attack, which avoids computationally demanding projections like Sinkhorn iterations.

The idea is that one can do projected gradient descent in the flow domain, where the projection is computationally less expensive than a projection with respect to the Wasserstein metric in the image domain, as proposed by Wong et al.\ \cite{wong2020wasserstein}. Assume we want to attack $F=c\circ f$ at the data point $\mu\in P(\mathcal{X})$ with label $y$. Choose any reference $R$ and let $\Tilde{F}$ and $\delta^\mu$ be defined as before. We want to find an adversarial example within a $W_1$-radius of $\varepsilon$. Define the loss\footnote{Actually, we can use any adversarial loss we like, but for projected gradient descent this loss is sufficient.}
$$
L(\delta)= \min_{t\neq y} \Tilde{f}_y(\delta)-\Tilde{f}_t(\delta)
$$
for any $\delta\in\mathcal{F}$ and let $\hat\delta\leftarrow\delta^\mu$. We can then perform a gradient step
$$
\hat\delta \leftarrow \hat\delta-\alpha\nabla L (\hat\delta).
$$
To ensure that $\Delta(R,\hat\delta)$ is a probability distribution and that $W_1(\Delta(R,\hat\delta),\mu)\leq\varepsilon$, we can project $\hat\delta$ onto $B_\varepsilon^{L_1}(\delta^\mu)\cap \mathbb{F}$. This amounts to the optimisation problem
$$
\hat\delta \leftarrow \text{proj}_{B_\varepsilon^{L_1}(\delta^\mu)\cap \mathbb{F}}(\hat\delta) \in \argmin_{\delta\in B_\varepsilon^{L_1}(\delta^\mu)\cap \mathbb{F}} \norm{\hat\delta-\delta}_1
$$
which, as we have seen in lemma \ref{lemma:second_characteristic}, is equivalent to
\begin{align*}
    &\min_{\delta\in\mathcal{F}} \norm{\hat\delta-\delta}_1 \\
    &\text{subject to\ \ } 0 \leq (A\delta+R)_i\ \forall\, i 
    \text{\ \ and \ } \norm{\delta^\mu-\delta}_1\leq \varepsilon.
\end{align*}

This optimisation problem is convex and can be solved easily with standard optimisation tools. 
We can repeat this as long as $\Tilde{F}(\hat\delta)=\Tilde{F}(\delta^{\mu})$. Once we have found $\hat\delta$ such that $\Tilde{F}(\hat\delta)\neq\Tilde{F}(\delta^{\mu})$, we can map it to $\Bar{\mu}=\Delta(R,\hat\delta)$ and consequently it will hold that $F(\Bar{\mu})\neq F(\mu)$ and $W_1(\Bar{\mu},\mu)\leq \norm{\hat\delta-\delta^\mu}_1\leq \varepsilon$ (theorem \ref{theorem:levine}).
Algorithm \ref{alg:new_wasserstein_adversarial_attack} summarises this attack. 

\begin{algorithm}[h!]
\caption{Wasserstein Projected Gradient Descent Attack}
\label{alg:new_wasserstein_adversarial_attack}
\hspace*{\algorithmicindent} \textbf{Input} Probability distribution $\mu$, reference distribution $R$, classifier $F$. \\
\hspace*{\algorithmicindent} \textbf{Output} Adversarial example $\Bar{\mu}$.\\
\begin{algorithmic}[1]
    \vspace{-0.5cm}
    \STATE $\delta^\mu \leftarrow \Delta^{-1}(R,\mu)$
    \STATE $\hat\delta \leftarrow \delta^\mu$
    \WHILE{a termination condition has not been met}
    \STATE $\hat\delta \leftarrow \hat\delta-\alpha \nabla L (\hat\delta)$
    \IF{$\hat\delta \notin\mathbb{F}$ or $\norm{\hat\delta-\delta^\mu}_1>\varepsilon$}
    \STATE $\hat\delta \leftarrow \text{proj}_{B_\varepsilon^{L_1}(\delta^\mu)\cap\mathbb{F}}(\hat\delta)$
    \ENDIF
    \ENDWHILE
    \STATE $\Bar{\mu}\leftarrow \Delta(R,\hat\delta)$
\end{algorithmic}
\end{algorithm}

An adversarial example created with this attack can be seen in figure \ref{fig:new_adversarial_example}. First experiments show that this attack produces adversarial examples with a $W_1$-perturbation size around 1.2 times larger than the attack introduced by Wong et al.\ \cite{wong2020wasserstein}. However the computation time is reduced by orders of magnitude. Setting the maximum number of iterations to 30 for both attacks, we decrease the average time per data point by a factor of around 160 compared to Wong's Wasserstein attack. However, a thorough evaluation has not yet been performed.

\begin{figure}[h]
    \centering
    \includegraphics[width=\linewidth]{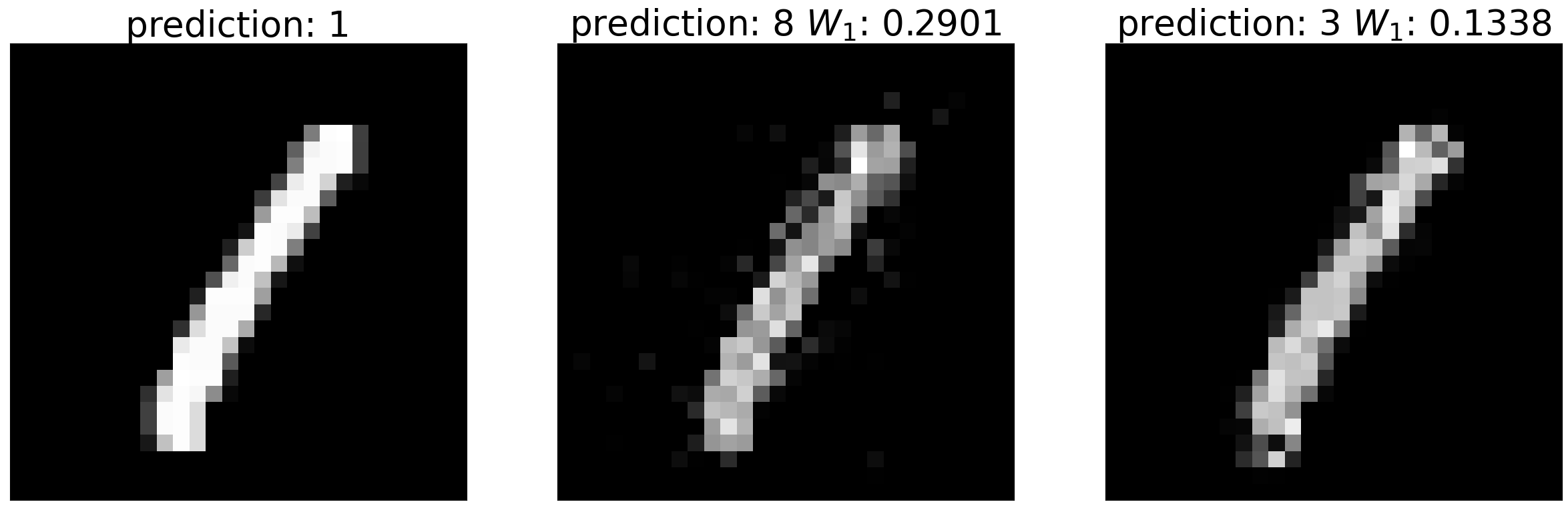}
    \caption{On the left: the original image. In the center: an adversarial example created with the attack introduced in this work. On the right: an adversarial example created with the method proposed by Wong et al.\ \cite{wong2020wasserstein}.}
    \label{fig:new_adversarial_example}
\end{figure}

\section{Conclusion}

In this paper we derive a certification method for Wasserstein adversarial robustness by constructing an equivalent $L_1$-certification problem in the flow domain (theorem \ref{certification_in_the_flow_domain_theorem}). In this $L_1$-certification problem we can then make use of the large arsenal of complete and incomplete certification methods that have already been developed in this threat model. To fully exploit this construction, we give insight into the set of all feasible flows which allows us to improve the certification even further. Based on these considerations, we also derive a new adversarial attack with respect to the Wasserstein metric (section \ref{subsection:a_new_wasserstein_adversarial_attack}). 
In future work, it is necessary to conduct large empirical evaluations for both presented methods. Whilst we only conducted experiments using FROWN \cite{lyu2019fastened}, for future experiments other certification methods should be considered as well. Also, one could attempt to certify with respect to versions of the Gromov-Wasserstein metric by combining this work with certification methods for geometric transformations.

\bibliographystyle{plain}
\bibliography{references}

\newpage
~
\newpage

\begin{appendices}
\section{Linear Classifiers}
One way to show the importance of taking into account which flows are actually feasible is by using a linear classifier $F=c\circ f \iff F(\mu)=c(W\mu+b)$. This setting is interesting, because then $\Tilde{F}=c\circ f\circ \Delta(R,.)$ is also a linear classifier (see corollary \ref{f_tilde_is_a_nn_corollary}). In this case, the certification and adversarial attack can be solved analytically by explicitly computing the minimal perturbation to the decision boundary.

\begin{lemma}[$L_1$ Perturbation Solution of Linear Classifiers]
\label{lemma:L1_linear_certification}
Let $F(\mu)=c(W\mu+b)$ be a linear classifier, where $W\in\mathbb{R}^{C\times nm}$ and $b\in\mathbb{R}^{C}$, and let $\mu$ be an image with label $y$. Take any reference distribution $R$ and for a matrix $B$ denote $B_i$ as the $i^{th}$ row of $B$. Then (without fine-tuning) the best certified $W_1$-radius for $\mu$ is
\begin{align*}
&\varepsilon = \min_{t\neq y} \min_j\\
&\left|\frac{(((WR+b)_t-(WR+b)_y)+((WA)_t-(WA)_y)\delta^\mu)}{((WA)_y-(WA)_t)_j}\right|.
\end{align*}
\end{lemma}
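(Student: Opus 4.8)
The plan is to reduce the Wasserstein certification problem for the linear classifier $F$ to an $L_1$-certification problem for the induced linear classifier $\tilde F$ in the flow domain (via Theorem~\ref{certification_in_the_flow_domain_theorem} and Corollary~\ref{f_tilde_is_a_nn_corollary}), and then solve that $L_1$ problem exactly by computing the distance from $\delta^\mu$ to each relevant decision hyperplane. Since we are working without fine-tuning, we ignore feasibility and simply want the largest $\varepsilon$ such that the open $L_1$-ball $B_\varepsilon^{L_1}(\delta^\mu)$ contains no point classified differently from $\delta^\mu$ by $\tilde F$; by Theorem~\ref{certification_in_the_flow_domain_theorem} this $\varepsilon$ is a valid (and here optimal, since the linear certification is complete) certified $W_1$-radius for $\mu$.

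First I would write down $\tilde F$ explicitly: by Corollary~\ref{f_tilde_is_a_nn_corollary}, $\tilde F(\delta) = c(\tilde W \delta + \tilde b)$ with $\tilde W = WA$ and $\tilde b = WR + b$, so the score of class $i$ at a flow $\delta$ is $(WA)_i\delta + (WR+b)_i$. Note $\tilde f_y(\delta^\mu) = f_y(\mu)$ and similarly for all classes, so $\delta^\mu$ is classified as $y$ exactly when $\mu$ is. Next I would observe that $\tilde F(\delta) \neq y$ for some $\delta$ iff there is a class $t\neq y$ whose score meets or exceeds that of $y$, i.e. the hyperplane $H_t = \{\delta : ((WA)_t - (WA)_y)\delta + ((WR+b)_t - (WR+b)_y) = 0\}$ is reached. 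The smallest radius at which a classification change can occur is therefore $\min_{t\neq y} \mathrm{dist}_{L_1}(\delta^\mu, H_t)$.

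The remaining step is the standard formula for the $L_1$-distance from a point to a hyperplane $\{\delta : a^\top \delta + \beta = 0\}$, which is $|a^\top \delta^\mu + \beta| / \|a\|_\infty = |a^\top\delta^\mu + \beta| / \max_j |a_j|$, and this maximum-over-$j$ of $|a_j|$ in the denominator, when pulled inside, becomes $\min_j |a^\top\delta^\mu + \beta| / |a_j|$. Plugging in $a = ((WA)_t - (WA)_y)^\top$ and $\beta = (WR+b)_t - (WR+b)_y$, and noting $\big((WA)_y - (WA)_t\big)_j = -a_j$ so $|a_j|$ is unchanged, yields precisely the stated double minimum over $t\neq y$ and $j$. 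I would also remark that one should restrict to indices $j$ with $a_j \neq 0$ (otherwise that hyperplane component contributes $+\infty$), and that the open-versus-closed ball distinction is immaterial for the supremal certified radius.

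The main obstacle, such as it is, is bookkeeping rather than conceptual: making sure the $L_1$-to-$L_\infty$ duality is applied correctly (the dual norm of $L_1$ is $L_\infty$, hence the $\max_j$, hence the $\min_j$ after inversion), and being careful that "decision boundary reached" is the right event — i.e. that a tie $\tilde f_t = \tilde f_y$ already counts as loss of the certificate, consistent with the strict inequality $W_1(\mu,\nu) < \varepsilon$ in the definition of $B_\varepsilon^{W_1}(\mu)$. One should verify that $\delta^\mu$ actually lies on the correct side of every $H_t$ (true since $\mu$ is correctly classified with label $y$), so that each distance is well-defined and positive.
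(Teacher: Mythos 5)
Your proposal is correct and follows essentially the same route as the paper: pass to the flow domain via Corollary~\ref{f_tilde_is_a_nn_corollary} and Theorem~\ref{certification_in_the_flow_domain_theorem}, identify the decision hyperplanes $w_t^\top\delta + d_t = 0$ of the linear classifier $\tilde F$, and compute the $L_1$-distance from $\delta^\mu$ to each of them. Your use of the dual-norm formula $|w_t^\top\delta^\mu + d_t|/\norm{w_t}_\infty$ is just a cleaner packaging of the paper's explicit perturbation along the coordinate direction $e_j$ with maximal $|(w_t)_j|$, and your side remarks (excluding $j$ with zero denominator, ties counting as loss of the certificate) are consistent with the intended statement.
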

\begin{proof}
From corollary \ref{f_tilde_is_a_nn_corollary} we know that $\Tilde{F}$ is also a linear classifier with $\Tilde{f}(\delta)=WA\delta+WR+b=\Tilde{W}\delta+\Tilde{b}$. A flow $\delta$ is classified as $y$, if for all $t\neq y$ it holds
\begin{align*}
\Tilde{W}_y\delta + \Tilde{b}_y - \Tilde{W}_t\delta - \Tilde{b}_t \geq 0
&\iff(\Tilde{W}_y - \Tilde{W}_t)\delta + (\Tilde{b}_y-\Tilde{b}_t) \geq 0\\
&\iff: w_t^\top \delta +d_t \geq 0.
\end{align*}
The decision boundary is given by $\{\delta\in\mathcal{F}| w_t^\top \delta+d_t=0\}$. Therefore, we can certify a $L_1$-ball with radius $\varepsilon$ by
solving the perturbation to the decision boundary with respect to $\alpha_t$ along the standard basis vector $e_j$ for which $w_t^\top e_j$ is maximal ($j \in \argmax_j (w_t)_j$):
\begin{align*}
    w_t^\top(\delta^\mu+\alpha_{t}e_j)+d_t=0 
    \iff \alpha_{t}=\frac{(-d_t-w_t^\top \delta^\mu)}{w_t^\top e_j}.
\end{align*}
Plugging in $w_t^\top = \Tilde{W}_y-\Tilde{W}_t$, $d_t = \tilde{b}_y-\Tilde{b}_t$ and $\Tilde{W}=WA$, $\Tilde{b}=WR+b$ reveals
\begin{align*}
    \alpha_{t} &= \frac{(-(\tilde{b}_y-\Tilde{b}_t)-(\Tilde{W}_y-\Tilde{W}_t)\delta^\mu)}{(\Tilde{W}_y-\Tilde{W}_t)e_j}\\
    &= \frac{(-((WR+b)_y-(WR+b)_t)-((WA)_y-(WA)_t)\delta^\mu)}{((WA)_y-(WA)_t)_j}\\
    &= \frac{(((WR+b)_t-(WR+b)_y)+((WA)_t-(WA)_y)\delta^\mu)}{((WA)_y-(WA)_t)_j}.
\end{align*}

For $\varepsilon= \min_{t\neq y}|\alpha_{t}|$ we have provably the maximal certified vanilla radius for $B_\varepsilon^{L_1}(\delta^\mu)$ and by theorem \ref{certification_in_the_flow_domain_theorem} we have certified $B_\varepsilon^{W_1}(\mu)$.
\end{proof}

This lemma uses simple geometry to deduce the maximal certified radius, as can be seen in figure \ref{fig:linear_certification}. 
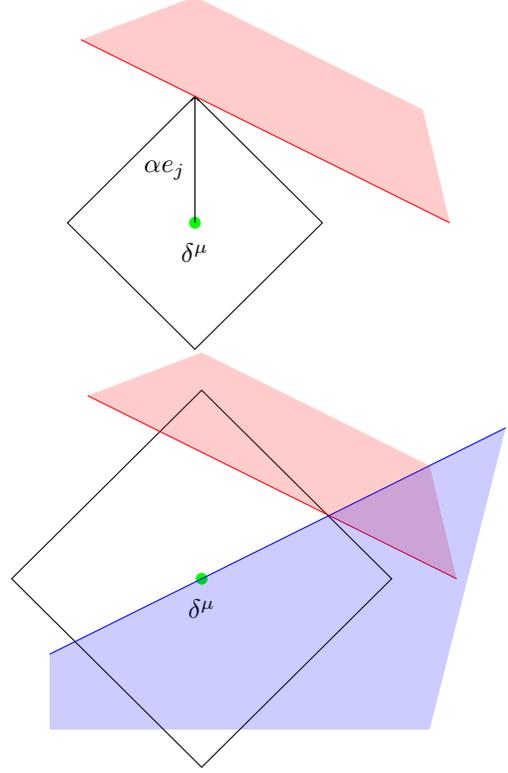
\begin{figure}[t]
    \centering
    \begin{subfigure}[b]{0.49\textwidth}
    \centering
    \begin{tikzpicture}
        \node[] (delta) at (0,0){};
        \node[] (K1) at (-0.4,0.7){$\alpha e_j$};
        \node[] (K2) at (0,-0.4){$\delta^\mu$};
        \filldraw[green] (delta) circle (2pt);
        \draw[] (1.67705,0)--(0,1.67705)--(-1.67705,0)--(0,-1.67705)--cycle;
        \draw[->] (0,0) -- (0,1.67705);
        \draw[red,-] (-1.5,2.42705) to (3.3541,0);
        
        \path[fill=red, opacity=0.2] (-1.5,2.42705)--(3.3541,0)--(3,1.5)--(0,3)--cycle;
    \end{tikzpicture}
    \end{subfigure}
    \begin{subfigure}[b]{0.49\textwidth}
    \centering
    \begin{tikzpicture}
        \node[] (delta) at (0,0){};
        \node[] (K2) at (0,-0.4){$\delta^\mu$};
        \draw[] (2.5,0)--(0,2.5)--(-2.5,0)--(0,-2.5)--cycle;
        \filldraw[green] (delta) circle (2pt);
        \draw[red,-] (-1.5,2.42705) to (3.3541,0);
        
        \draw[blue,-] (-2,-1) -- (4,2);
        
        \path[fill=red, opacity=0.2] (-1.5,2.42705)--(3.3541,0)--(3,1.5)--(0,3)--cycle;
        
        \path[fill=blue, opacity=0.2] (-2,-1) -- (4,2) -- (3,-2) -- (-2,-2)--cycle;
    \end{tikzpicture}
    \end{subfigure}
    \caption{Finding the best adversarial example for a linear classifier, on the top without restricting to the set of feasible flows, and on bottom using the fine-tuned version. The red region is the region where the classification changes, the blue region is the set of feasible flows.}
    \label{fig:linear_certification}
\end{figure}
It shows how to analytically find the best certificate for our vanilla certification method, in which we do not take into account where in the flow domain flows are feasible and where they are not. However, using theorem \ref{fine_tuned_certification_in_the_flow_domain_theorem}, we can improve upon this:

\begin{figure*}[t]
    \centering
    \includegraphics[width=0.9\textwidth]{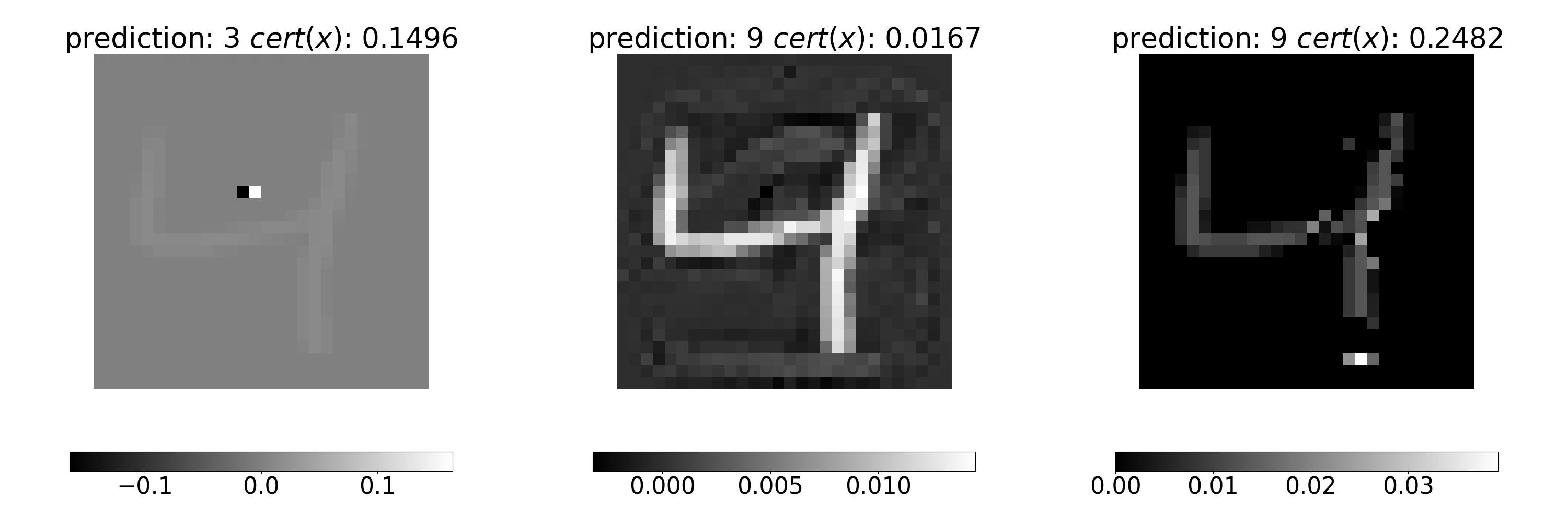}
    \caption{The adversarial examples constructed analytically for a linear classifier. On the left using the $L_1$-norm and thus corresponding to the vanilla certification method. In the middle using the $L_2$ norm. Note that the left two adversarial examples are not valid probability distributions. The $L_2$ certificate is worse by a factor of around 10. On the right the adversarial example that corresponds to the fine-tuned certification using lemma \ref{lemma:fine_tuned_linear_perturbation}. This adversarial example actually is a probability distribution.}
    \label{fig:linear_adversarial_examples}
\end{figure*}

\begin{lemma}[Fine-tuned Perturbation Solution of Linear Classifiers]
\label{lemma:fine_tuned_linear_perturbation}
Let
\begin{align*}
    &\Tilde{\alpha_t} = \min_\delta \norm{\delta-\delta^\mu}_1 \\
    \text{subject to\ \ }&\delta\in \mathbb{F}\cap \{\delta\in\mathcal{F} | w_t^\top \delta+d_t\le 0\}\\
    \iff &0 \leq A\delta+R \text{\ \ and\ \ } w_t^\top \delta+d_t \leq 0
\end{align*}
where $w_t$ is defined as in the proof of lemma \ref{lemma:L1_linear_certification}. Then it holds for $\Tilde{\varepsilon}=\min_t \Tilde{\alpha}_t$ that $\Tilde{\varepsilon} \geq \varepsilon$, where $\varepsilon$ is chosen according to lemma \ref{lemma:L1_linear_certification}. Furthermore, for
\begin{align*}
    &t,\delta \in \argmin_{t,\delta} \norm{\delta-\delta^\mu}_1 \\ \text{subject to\ \ } &0 \leq A\delta+R \text{\ \ and\ \ } w_t^\top \delta+d_t \leq 0,
\end{align*}
it holds that $\Delta(R,\delta)$ is the solution to
\begin{align*}
    &\Delta(R,\delta)\in\argmin_\nu W_1(\mu,\nu) \text{\ \ subject to\ \ } F(\nu)\neq y.
\end{align*}
\end{lemma}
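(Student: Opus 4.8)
The plan is to prove the two assertions separately, in both cases pushing the problem into the flow domain via Theorem~\ref{theorem:levine} and then using that $\tilde f$ is affine (Corollary~\ref{f_tilde_is_a_nn_corollary}). For the inequality $\tilde\varepsilon\ge\varepsilon$ I would note that the quantity $\varepsilon$ from Lemma~\ref{lemma:L1_linear_certification} is exactly the $L_1$-distance from $\delta^\mu$ to the union of the decision hyperplanes $H_t:=\{\delta\in\mathcal F : w_t^\top\delta+d_t=0\}$, $t\neq y$, whereas $\tilde\alpha_t$ is the $L_1$-distance from $\delta^\mu$ to the \emph{smaller} set $\mathbb F\cap\{\delta : w_t^\top\delta+d_t\le 0\}\subseteq\{\delta : w_t^\top\delta+d_t\le 0\}$. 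Since $\mu$ is classified as $y$ we have $w_t^\top\delta^\mu+d_t\ge 0$, so $\delta^\mu$ lies on the far side of (or on) each half-space and $\operatorname{dist}_{L_1}(\delta^\mu,\{w_t^\top\delta+d_t\le 0\})=\operatorname{dist}_{L_1}(\delta^\mu,H_t)$; intersecting with $\mathbb F$ can only shrink the feasible set, hence $\tilde\alpha_t\ge\operatorname{dist}_{L_1}(\delta^\mu,H_t)$, and minimizing over $t$ gives $\tilde\varepsilon\ge\varepsilon$. This part is just monotonicity of the distance function under set inclusion and should be short.

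For the second, more substantial claim I would show that $\tilde\varepsilon$ is the optimal value of $\min_\nu W_1(\mu,\nu)$ subject to $\nu\in P(\mathcal X)$ and $F(\nu)\neq y$, and that $\Delta(R,\delta)$ for a minimizing pair $(t,\delta)$ attains it. The three ingredients are: the consequence of Theorem~\ref{theorem:levine} established in the preliminaries, $W_1(\mu,\nu)=\min_{\delta'\,:\,\Delta(R,\delta')=\nu}\norm{\delta^\mu-\delta'}_1$; the identity $F(\nu)=\tilde F(\delta')$ for every $\delta'$ in the fibre of $\nu$; and Lemma~\ref{lemma:second_characteristic}, which gives $\nu\in P(\mathcal X)\iff$ the fibre of $\nu$ lies in $\mathbb F$ (equivalently $\Delta(R,\delta')=A\delta'+R\ge 0$). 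Substituting, the image-space problem becomes $\min_{\delta'}\{\norm{\delta^\mu-\delta'}_1 : \delta'\in\mathbb F,\ \tilde F(\delta')\neq y\}$. By linearity, $\tilde F(\delta')\neq y$ implies $w_t^\top\delta'+d_t\le 0$ for some $t\neq y$ (and the converse holds whenever the inequality is strict, the boundary case being absorbed by the tie-breaking rule of $c$), so this minimum decomposes as $\min_{t\neq y}\tilde\alpha_t=\tilde\varepsilon$.

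To finish I would argue: (i) \emph{lower bound} — for any admissible $\nu$ and any $\delta'$ in its fibre, $\delta'\in\mathbb F$ (from $\nu\ge 0$ and Lemma~\ref{lemma:second_characteristic}) and $w_t^\top\delta'+d_t\le 0$ for some $t$, so $\norm{\delta^\mu-\delta'}_1\ge\tilde\alpha_t\ge\tilde\varepsilon$; taking the minimum over the fibre yields $W_1(\mu,\nu)\ge\tilde\varepsilon$; (ii) \emph{attainment} — for the minimizing $(t,\delta)$ set $\nu^\ast:=\Delta(R,\delta)$; then $\delta\in\mathbb F$ forces $\nu^\ast\in P(\mathcal X)$ (Lemma~\ref{lemma:second_characteristic}), $w_t^\top\delta+d_t\le 0$ gives $F(\nu^\ast)=\tilde F(\delta)\neq y$, and $W_1(\mu,\nu^\ast)\le\norm{\delta^\mu-\delta}_1=\tilde\varepsilon$; combined with (i) this pins $W_1(\mu,\nu^\ast)=\tilde\varepsilon$ and hence optimality. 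The main obstacle I anticipate is the fibre bookkeeping: one must confirm that replacing the image-space minimization by a flow-space minimization does not lose anything — i.e. that no flow in the fibre of $\nu^\ast$ is strictly closer to $\delta^\mu$ than $\delta$ is, which follows from the same $\min$-over-$t$ argument by contradiction — and one must handle the degenerate boundary case $w_t^\top\delta+d_t=0$ cleanly, either by fixing the tie-breaking convention of $c$ or by passing to an infimum.
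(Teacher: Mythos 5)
Your proposal is correct and follows essentially the same route as the paper: both parts are reduced to the flow domain via Theorem~\ref{theorem:levine} (writing $W_1$ as a minimum over the fibre) together with the feasibility characterisation of Lemma~\ref{lemma:second_characteristic}, and your direct lower-bound-plus-attainment argument for the second claim is just an unfolded version of the paper's short contradiction argument. If anything, you are more careful than the paper's proof, which silently ignores the tie-breaking boundary case $w_t^\top\delta+d_t=0$ that you explicitly flag.
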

\begin{proof}
The first part of this lemma follows directly from lemma \ref{lemma:second_characteristic} combined with theorem \ref{certification_in_the_flow_domain_theorem}. The second part can also be seen quite easily: Assume there exists another $\Tilde{\nu}$ such that $W_1(\mu, \Tilde{\nu})< W_1(\mu,\nu)$ and $F(\Tilde{\nu})\neq F(\mu)$. Then from theorem \ref{theorem:levine} it follows for some $\delta^{\Tilde{\nu}}$ that $\norm{\delta^{\mu}-\delta^{\Tilde{\nu}}}_1 < \norm{\delta^{\mu}-\delta^{\nu}}_1$. This is a contradiction, as $\delta^{\nu}$ is the minimum. Consequently, the assumption must be wrong.
\end{proof}

Because the above optimisation is a convex optimisation, it can be solved easily using standard optimisation tools. From the two lemmas above, we also get an explicit method how to compute the adversarial examples we certified against. For each method, an example can be seen in figure \ref{fig:linear_adversarial_examples}. It demonstrates really well, why the restriction to the set of feasible flows yields better results. Without doing so, one certifies against images that have nothing to do with probability distributions. Also, in this example the received certificate is approximately $1.6$ times better in the fine-tuned method. 
Additionally to being a proof of concept of a better certification method, the fine-tuned approach gives an extremely efficient method for finding adversarial examples for linear classifiers. 

\section{Algorithm for $\Delta^{-1}$}

Algorithm \ref{mapping_mu_to_delta_algorithm_with_ot} is based on the algorithm introduced by Ling and Okada \cite{okada2007earthmovers}. It uses the unique property of the $L_1$-norm that it can be decomposed into steps between neighbouring pixels. For example, if a transport plan $\pi$ tells you to move $\pi_{(i,j),(i+k,j)}$ mass from pixel $(i,j)$ to pixel $(i+k,j)$, we can also move the mass to $(i+1,j)$, then to $(i+2,j)$ and so on. This is stored in the flow $\delta$.

\begin{algorithm}[h!]
\caption{Mapping $\mu$ to $\delta^\mu\in S_R^\mu$ for general $R$}
\label{mapping_mu_to_delta_algorithm_with_ot}
\hspace*{\algorithmicindent} \textbf{Input} Probability distribution $\mu$, reference distribution $R$, optimal transport solver $OT$ that returns a transport plan\\
\hspace*{\algorithmicindent} \textbf{Output} A flow $\delta^\mu\in S_R^\mu$, such that $\Delta(R,\delta^{\mu})=\mu$
\begin{algorithmic}[1]
\STATE $\pi\leftarrow OT(R,\mu)$
\FOR{$(i,j)\in\mathcal{X}$}
    \FOR{$(k,l)\in\mathcal{X}$}
        \IF{$l\geq j$}
            \FOR{$step \in \{j,...,l-1\}$}
                \STATE $\delta^{\rightarrow}_{i,step}\leftarrow \delta^{\rightarrow}_{i,step}+ \pi_{(i,j),(k,l)}$
            \ENDFOR
        \ENDIF
        \IF{$l< j$}
            \FOR{$step \in \{l,...,j-1\}$}
                \STATE $\delta^{\rightarrow}_{i,step}\leftarrow \delta^{\rightarrow}_{i,step}- \pi_{(i,j),(k,l)}$
            \ENDFOR
        \ENDIF
        \IF{$k\geq i$}
            \FOR{$step \in \{i,...,k-1\}$}
                \STATE $\delta^{\rightarrow}_{step,l}\leftarrow \delta^{\rightarrow}_{step,l}+ \pi_{(i,j),(k,l)}$
            \ENDFOR
        \ENDIF
        \IF{$k< i$}
            \FOR{$step \in \{k,...,i-1\}$}
                \STATE $\delta^{\rightarrow}_{step,l}\leftarrow \delta^{\rightarrow}_{step,l}- \pi_{(i,j),(k,l)}$
            \ENDFOR
        \ENDIF
    \ENDFOR
\ENDFOR
\end{algorithmic}
\end{algorithm}

\end{appendices}
\end{document}